\newcommand{\R}{\mathbb{R}}
\newcommand{\N}{\mathbb{N}}
\newcommand{\prob}{\mathbb{P}} % probability
\newcommand{\E}{\mathbb{E}} % expectationBayes
\newcommand{\D}{\mathcal{D}}
\newcommand{\sts}{\mathcal{S}} % State space
\newcommand{\as}{\mathcal{A}}  % Action space
\newcommand{\mdp}{\mathcal{M}}
\newcommand{\tmax}{t_{\max}}
\newcommand{\mdpfull}{\mdp=\left(\sts, \as, p, r, \gamma, \tmax, \rho_0\right)}
\newcommand{\ssucc}{\sts_{\text{succ}}}
\newcommand{\argmax}{\text{argmax} }
\newtheorem{theorem}{Theorem}
\title{Walking the Values in Bayesian Inverse Reinforcement Learning}
\author[1]{\href{mailto:<ondrej@bajgar.org>?Subject=Your UAI 2024 paper}{Ondrej Bajgar}{}}
\author[1]{Alessandro Abate}
\author[2]{Konstantinos Gatsis}
\author[1]{Michael A. Osborne}
\affil[1]{%
    University of Oxford
}
\affil[2]{University of Southampton}
\begin{document}

\maketitle

\begin{abstract}
    The goal of Bayesian inverse reinforcement learning (IRL) is recovering a posterior distribution over reward functions using a set of demonstrations from an expert optimizing for a reward unknown to the learner. The resulting posterior over rewards can then be used to synthesize an apprentice policy that performs well on the same or a similar task.
    A key challenge in Bayesian IRL is bridging the computational gap between the hypothesis space of possible rewards and the likelihood, often defined in terms of Q values: vanilla Bayesian IRL needs to solve the costly forward planning problem -- going from rewards to the Q values -- at every step of the algorithm, which may need to be done thousands of times. We propose to solve this by a simple change: instead of focusing on primarily sampling in the space of rewards, we can focus on primarily working in the space of Q-values, since the computation required to go from Q-values to reward is radically cheaper. Furthermore, this reversion of the computation makes it easy to compute the gradient allowing efficient sampling using Hamiltonian Monte Carlo. We propose ValueWalk -- a new Markov chain Monte Carlo method based on this insight -- and illustrate its advantages on several tasks.
\end{abstract}

\section{Introduction}
Reinforcement learning (RL) has shown impressive performance across a wide variety of tasks, ranging from robotics to game playing. However, one of the main challenges in applying RL to real-world problems is specifying an appropriate reward function by hand, which is often difficult and can result in reward functions that are only imperfect proxies for designers' intentions. Inverse reinforcement learning (IRL) addresses this issue by instead learning the underlying reward function from expert demonstrations.

A key challenge in IRL is that the reward function is often underdetermined by the available demonstrations, as multiple reward functions can lead to the same optimal behaviour. This can be solved by picking a criterion for choosing among the reward functions compatible with the demonstrations -- maximum margin~\citep{ng2000,ratliff2006} and maximum entropy \citep{ziebart2008} are the most prominent examples. As an alternative, Bayesian IRL explicitly tracks the uncertainty in the reward using a probability distribution. This not only accounts for the issue of underdeterminacy but also provides principled uncertainty estimates to any downstream tasks, which can be used, for instance, for the synthesis of safe policies or for active learning.

While having these attractive properties, Bayesian IRL is computationally challenging. While inference is done over the space of reward functions (in terms of which the prior is also expressed), the likelihood is usually formulated in terms of Q values (or is otherwise linked to the distribution of trajectories), and going from the former to the latter may require solving the whole forward planning problem at each iteration (as is case in the original Bayesian IRL algorithm \citep{ramachandran2007}), which is expensive in itself and may further need to be done thousands of times during IRL inference. To avoid this, we propose to use a simple insight: while going from rewards to Q-values is expensive, the inverse calculation can be much simpler. Thus, we propose to perform the inference as if it were done primarily over the space of Q-values, computing reward estimates beside it, resulting in a much cheaper algorithm. A related formulation appeared already in the variational method of \cite{chan2021}, which was, however, learning only a point estimate of the Q-function thus sacrificing Bayesianism from the centre of the algorithm. 

We instead propose a new method that provides a full Bayesian treatment of the Q values, along with the rewards, and is able to provide samples from the true posterior, being based on Markov chain Monte Carlo (MCMC) as opposed to variational inference, which needs to pre-specify a family of distributions within which to approximate the posterior. Furthermore, since the computation required at each step is much simpler than in prior MCMC-based methods \citep{ramachandran2007, michini2012}, which in itself makes our method more efficient, we can also easily calculate the gradient, which allows us to use Hamiltonian Monte Carlo \citep{duane1987} granting further gains in efficiency. 

The contributions of this paper are the following: (1) we provide the first MCMC-based (and thus agnostic to the shape for the posterior) algorithm for continuous-space Bayesian inverse reinforcement learning; (2) we show that it scales better on discrete-space cases than the MCMC-based baseline, PolicyWalk; and (3) we show that we outperform the previous state-of-the-art algorithm for Bayesian IRL on continuous state-spaces, AVRIL, better capturing the posterior over rewards and performing better on imitation learning tasks. 

The paper is organized as follows: Section 2 provides background on inverse reinforcement learning and Markov-chain Monte Carlo and summarizes related work. Section 3 introduces our proposed algorithm called ValueWalk. Section~\ref{sec:experiments} compares our approach to an MCMC-based predecessor, PolicyWalk \citep{ramachandran2007}, the previous state-of-the-art scalable method for Bayesian IRL, AVRIL \citep{chan2021}, and 2 imitation learning baselines on several control tasks. %, as well as on clinical data from the MIMIC-III~\cite{johnson2016} dataset.

\section{Background}
\subsection{Bayesian inverse reinforcement learning}

The goal of Bayesian inverse reinforcement learning is recovering a posterior distribution over reward functions based on observing a set of demonstrations $\D=\{(\phi(s_1),a_1),...,(\phi(s_n),a_n)\}$ from an expert acting in a Markov decision process (MDP) $\mdpfull$ where 
 $\sts, \as$ are the state and action spaces respectively,
$\phi: \sts \to \Phi$ is a feature function representing states in a feature space $\Phi$,
$p: \sts \times \as \to \mathcal P(\sts)$ is the transition function where $\mathcal P(\sts)$ is a set of probability measures over $\sts$,
$r: \Phi\times \as \to  \mathbb{R}$ is a reward function,
    % \item $\phi:\sts\to\R^n$ is a feature representation of the states, also interpretable as an observation function (we will sometimes use $\phi_i$ to denote $\phi(s_i)$); 
$ \gamma\in (0,1) $ is a discount rate,
$\tmax\in\N\cup\{\infty\}$ is the time horizon, and
 $\rho_0\in\mathcal P(\sts)$ is the initial state distribution.

In IRL, we know all elements of the MDP except for the reward function and, possibly, the transition function (the setting without the knowledge of transition dynamics -- or other form of access to the environment or its simulator -- is sometimes called \emph{strictly batch} \citep{jarrett2020}; our method is applicable in both this setting and the one including an environment simulator, though most of the experiments are run in the former setting following the main baseline method, AVRIL).
Instead, we have a model of how the expert policy is linked to the reward and, in the case of Bayesian IRL, also a prior distribution over reward functions, $p_R$ (which is, in general, a multi-dimensional stochastic process, that for any set of state-action pairs returns a joint probability distribution over the corresponding set of real-valued rewards). Commonly used expert models include Boltzmann rationality models such as
\begin{equation}
\label{eq:boltzmann-rat}
\prob [a_i | \phi(s_t)] = \frac{e^{\alpha Q^*(\phi(s_t),a_i)}}{\sum_{a'\in\as} e^{\alpha Q^*(\phi(s_t),a')}}
\end{equation}
\citep{ramachandran2007,chan2021} where
$Q^*(s,a)$ is the expected (discounted) return if action $a$ is taken in state $s$, and the optimal policy is subsequently followed, and $\alpha$ is a rationality coefficient; the maximum entropy approach \citep{ziebart2008}, where the probability of each trajectory is assumed to be proportional to the exponential of the trajectory's return; or sparse behaviour noise models \citep{zheng2014}, where the expert is assumed to behave rationally except for sparse deviations. Beside these approximately rational models, various models of irrationality can also be considered \citep{evans2015}. The Bayesian IRL framework is flexible with respect to the choice of expert model, each such model just resulting in a different likelihood function, and can also be extended to the case where the model is not fully known. 

In this article, we adopt the Boltzmann rationality model (\ref{eq:boltzmann-rat}). We will assume that conditional on the Q values, the actions chosen by the expert are independent, yielding the likelihood 
\begin{equation}
\label{eq:boltzmann-likelihood}
p(\D|r) = \prod_{s_t,a_t,s_{t+1}\in\D} \frac{e^{\alpha Q^*(\phi(s_t),a_t)}}{\sum_{a'\in\as} e^{\alpha Q^*(\phi(s_t),a')}} p(s_{t+1}|s_t,a_t)
\end{equation}
for a discrete action space $\as$ (the expression can readily be adapted to a continuous setting by replacing the sum by an integral).
Given this likelihood together with the prior over rewards $p_R$, we can calculate the posterior using the Bayes Theorem as $p(r|\D) = p(\D|r)p_R(r)/p(\D)$. Generally, we cannot calculate this posterior analytically, so in practice, we need to resort to approximate methods. In this article, we use Markov chain Monte Carlo sampling.

When performing Bayesian inference over the reward, the transition probabilities will be considered fixed (except for Appendix~\ref{app:unknown-transitions}, which discusses the extension of Bayesian inference also to transition probabilities). Thus looking at the likelihood as a function of the reward, we can write 
\begin{equation}
    p(\D|r)=c \prod_{s_t,a_t\in\D} \frac{e^{\alpha Q^*(\phi(s_t),a_t)}}{\sum_{a'\in\as} e^{\alpha Q^*(\phi(s_t),a')}} =: c \mathcal{L}(\D|r).
\end{equation}
Since $p(D) = \int p(D|r) d p_R(r) = c\int \mathcal{L}(D|r) d p_R(r)$, the constant transition term cancels out in the posterior, and, going forward, we can use the partial likelihood $\mathcal{L}$ in reward posterior inference. Furthermore, MCMC algorithms generally depend only on the unnormalized distribution, thus we can also drop the remainder of the marginal $p(D)$ from our calculation. 

\subsection{Markov-chain Monte Carlo (MCMC)}
Markov chain Monte Carlo (MCMC) methods form a class of algorithms widely used for sampling from complex probability distributions. MCMC methods rely on constructing Markov chains whose stationary distribution is the distribution of interest. Usually a new candidate sample in the chain is proposed and then accepted or rejected with probability proportional to the one under the target distribution -- in our case the posterior over rewards.

In simpler MCMC methods, such as Metropolis-Hastings~\citep{metropolis1953,hastings1970}, which were also used in some previous articles on Bayesian IRL \citep{ramachandran2007,michini2012}, the new step is proposed as a random jump in the sampling space. However, this often leads to a high rejection rate, if the jumps are large, or tightly correlated samples, if the jump is small, both of which can make the algorithm inefficient. 

Thus, we instead use the popular Hamiltonian (or hybrid) Monte Carlo (HMC; \cite{duane1987}) with the no-U-turn (NUTS) sampler \citep{hoffman2014}, which uses the gradient of the posterior density and Hamiltonian-like dynamics to propose samples that are far apart but still likely under the posterior, keeping a high acceptance rate, thus improving the efficiency of the algorithm.

\subsection{Related Work}

Inverse reinforcement learning is most often used as a component in imitation learning: the more general task of learning an apprentice policy from expert demonstrations (see \cite{zare2023} for a good recent survey). Beside IRL, the other major family of methods within imitation learning is behavioural cloning \citep{pomerleau1991,ross2011}, which, in its vanilla form, aims to learn the policy via supervised learning directly from the expert's observation-action pairs. The supervised learning approach has an advantage of lower computational cost, but faces the challenge of covariate-shift, since the training states are distributed according to the expert policy, not that of the learner agent, though multiple methods try to mitigate this by encouraging the learner policy to stay close to the expert one \citep{dadashi2020,reddy2019,brantley2019}.

Inverse reinforcement learning represents an alternative which, instead of directly learning the observation-action mapping, first learns an estimate of the reward function, which can then be used to synthesize a policy. This can offer better generalization, but usually requires a model of the environment or access to it in order to run reinforcement learning, and generally incurs a higher computational cost.

We build on the paradigm of Bayesian IRL introduced by \cite{ramachandran2007}. While the Bayesian approach is attractive thanks to its principled treatment of uncertainty in light of the limited demonstration data, the key downside relative to other methods has been its scalability to higher-dimensional settings. \cite{michini2012} try to improve efficiency upon Ramachandran by focusing computation into regions of the state space close to the expert demonstration, still using MCMC, while \cite{chan2021} try to improve efficiency by using an approximate variational distribution to model the posterior, as well as an additional neural network that tracks the Q function, which avoids the need for a costly inner-loop solver. \cite{mandyam2023} has recently used kernel density estimation as an alternative method for approximate Bayesian inference.\footnote{The evaluation in this paper focuses on an offline setting without access to environment dynamics, while the last mentioned method fundamentally depends on having access to the environment dynamics so we omit it from the comparison in this paper.}

As opposed to recent work experimenting with other approximation techniques, we return to MCMC, with its greater expressivity, while at the same time adapting it to be used with continuous state spaces, which would not be feasible with prior MCMC-based methods.

\section{Method}
% The key innovation in our method concerns the calculation of the posterior probability. The fact that the calculation required at every step is much simpler has two main advantages: (1) it directly reduces the computational cost, and (2) it allows for easy gradient calculation, enabling the use of gradient-based methods for proposing new samples.

Similarly to early work in Bayesian IRL \citep{ramachandran2007, michini2012}, we use Markov chain Monte Carlo sampling to produce samples from the posterior distribution over rewards given a prior and expert demonstrations. Our key innovation is in the way we calculate the posterior. At each step of the Markov chain, these previous methods generally (1) proposed a new reward (2) used some method of forward planning, such as policy iteration, to deduce the corresponding optimal Q function and then (3) used the Q function to evaluate the likelihood and the reward to evaluate the prior.

We suggest proceeding the other way round: our method proposes a set of new parameters of the Q function and then uses it to deduce the corresponding rewards, which is generally a much easier calculation than going from rewards to Q functions. The method then uses the reward to calculate the prior and the Q value to evaluate the likelihood, and combines the two to calculate the unnormalized posterior density. This value can then be used for calculating the acceptance probability in any chosen MCMC algorithm. Also thanks to the calculation being simple (rather than involving a RL-like inner-loop problem) and differentiable, we can also calculate the gradient, which we can use for efficient proposals using HMC+NUTS. Since we construct the random chain in the space of Q values instead of the space of rewards, used by previous methods, we call our new method ValueWalk.

\subsection{Finite state and action spaces}
\label{ssec:method-finite}
Let us first outline the algorithm for the case of finite state and action spaces since the calculation can be performed exactly in this case, and the later continuous algorithm builds on this base case. We concentrate here on the calculation of the posterior probability corresponding to a single proposed set of Q values (which is performed at each step of the HMC trajectory) and otherwise employ standard HMC. Note that here, we assume the knowledge of the environment dynamics $P$, since this finite setting is close to that of PolicyWalk~\citep{ramachandran2007}, which also assumes this knowledge. However, the method can easily be extended to the \textit{strictly batch} setting using steps analogous to the ones taken in the next subsection on continuous spaces, or can be combined with inference over transition probabilities (see Appendix~\ref{app:unknown-transitions}).

In this finite case, we perform inference over a vector $Q\in\R^{|\sts||\as|}$ representing the optimal Q-value for each state-action pair. The first thing to notice is that given such a vector, we can calculate the corresponding reward vector of the same dimensionality as $Q$ using the Bellman equation as
\begin{equation}
\label{eq:reward-bellman-finite}
R(s,a) = Q(s,a) - \gamma \sum_{s'\in\sts} p(s'|s,a) \sum_{a'\in\as} \pi_Q(a'|s') Q(s',a')    
\end{equation}
with either $\pi^Q(a'|s')=\mathbb{I}[a'=\argmax_{a''}Q(s',a'')]$ or a softmax approximation (which we use since it has the advantage of being differentiable using an inverse temperature coefficient $\bar \alpha$ to regulate the softness of the approximation).  Equation (\ref{eq:reward-bellman-finite}) can also be written in vector form as
$R = (I - \gamma \bar P) Q$ where $\bar P$ is a $|\sts||\as|\times|\sts||\as|$ matrix whose values are defined as $\bar P(s,a;s',a')=P(s'|s,a)\pi^Q(a'|s')$. In that case, given a prior $p_R$ over rewards, we can calculate the prior of $Q$ as
$$p_Q(Q) = p_R((I - \gamma \bar P)Q) \det(I-\gamma \bar P),$$
where $p_Q$ and $p_R$ are the prior probability densities of $Q$ and $R$ respectively. Since $\bar P$ is a stochastic matrix and $0<\gamma<1$, the determinant is always strictly positive. Note that the determinant needs to be recalculated only if the optimal policy changes and otherwise can be cached between steps of HMC eliminating the associated costly calculation. Furthermore, we found that in practice, the recovered samples do not differ significantly if the determinant term is omitted.

The above prior term can be combined with the likelihood
$$\mathcal{L}(D|Q) = \prod_{(s,a)\in\D} \exp(\alpha Q(s,a))/\sum_{a'\in \as}\exp(\alpha Q(s,a'))$$
to calculate the unnormalized posterior density $p(Q|\D) \propto p_Q(Q) \mathcal{L}(\D|Q)$ which we use in the standard HMC+NUTS algorithm to produce samples from the posterior over Q values, and, as a byproduct, also samples from the posterior over rewards (as would be expected from an IRL algorithm). Algorithm~\ref{alg:posterior-finite} summarizes the whole posterior probability calculation, and Theorem~\ref{thm:detailed-balance} in Appendix~\ref{app:proofs} formally shows that the secondary Markov chain over rewards produced by the algorithm also satisfies the detailed balance condition with respect to the posterior over rewards and thus constitutes a valid MCMC algorithm for sampling from the reward posterior.

\begin{algorithm}[t]
% \RestyleAlgo{ruled}
\KwData{a candidate vector of Q values, set of expert demonstrations $\D$, prior over rewards $p_R$}%, expert rationality coefficient $\alpha$, optimality approximation coefficient $\bar \alpha$}
\For{$s,s'\in \sts, a,a'\in \as$}{
    $\pi^Q(a'|s')=\mathbb{I}[a'=\argmax_{a''}Q(s',a'')]$ \;
    % $\pi(a|s) = \exp(\bar\alpha Q(s,a))/\sum_{a'\in \as}\exp(\bar\alpha Q(s,a'))$\;
    $\bar P(s,a;s',a') = p(s'|s,a)\pi(a'|s')$ \; 
    }
$ R = (I - \gamma \bar P) Q$ where $\bar R,\bar Q$ \;
$p_Q(Q) = p_R( R) \det(I-\gamma \bar P)$ \;
$\mathcal{L}(\D|Q) = \prod_{(s,a)\in\D} \exp(\alpha Q(s,a))/\sum_{a'\in \as}\exp(\alpha Q(s,a'))$ \;
\KwResult{$p(Q|\D) \propto p_Q(Q) \mathcal{L}(\D|Q)$; candidate sample $ R$}
\caption{Calculation  of the unnormalized posterior for finite $\sts$ and $\as$ and known transition probabilities $P$ (performed in each step of HMC). The resulting candidate reward sample $\bar R$ is then accepted/rejected together with the corresponding Q.}
\label{alg:posterior-finite}
\end{algorithm}

See Section~\ref{ss:exp-gridworld} for an example of this finite-case algorithm applied to a gridworld environment. Note that if the reward is known to depend only on the state, the sampling can instead be performed over state-values $V$. Similarly, if it depends on the full state, action, next state triple, it should be performed over state-action-state values to maintain a match in the dimensionality of the reward and value spaces.

\subsection{Continuous state representations}
\label{ssec:method-cts}
For continuous or large discrete spaces, it is generally no longer possible or practical to maintain a separate Q-function parameter for each state, so we need to resort to approximation. Thus, from now on, our inference will centre around parameters $\theta_Q\in\R^{n_Q}$ of a Q function approximator $Q_\theta:\Phi\times\as\to\R$ where $\Phi$ is the space of feature representations of the states. While the method is again centred around the Q function, the algorithm can also produce samples from the \emph{reward} posterior at any set of evaluation points of interest, $\D_{\text{eval}}$. Furthermore, a method such as warped Gaussian processes \citep{snelson2003} can then be used to generalize the reward posterior from $\D_{\text{eval}}$ to new parts of the state-action space.

The likelihood calculation remains very similar to the discrete case:
\begin{equation}
\label{eq:boltzmann-rat-cts}
\mathcal{L}(\D|\theta_Q) = \prod_{(s,a)\in\D} \frac{\exp(\alpha Q_{\theta_Q}(\phi(s),a))}{\sum_{a'\in \as}\exp(\alpha Q_{\theta_Q}(\phi(s),a'))}
\end{equation} (assuming $\as$ to be bounded).
What concerns the evaluation of the prior, the reward corresponding to given Q-function parameters can be expressed using the continuous Bellman equation as
\begin{equation*}
    R(s,a) = Q_{\theta_Q}\bigl(\phi(s),a\bigr) - \gamma\E_{s',a'|s,a}\Bigl[Q_{\theta_Q}\bigl(\phi(s'),a'\bigr)\Bigr]
    % \gamma \int_{s'\in\sts} p(s'|s,a) \int_{a'\in\as} \pi^Q(a'|\phi(s')) Q_{\theta_Q}(\phi(s'),a'),\;\forall s\in \sts, a\in\as.
\end{equation*}
on any subset of states and actions. 

In general, the integral in $\E_{s',a'|s,a}[Q_{\theta_Q}(\phi(s',a')]=\int_{s'\in\sts} p(s'|s,a) \max_{a'\in\as}Q_{\theta_Q}(\phi(s'),a')$ needs to be approximated, for which any of a number of numerical methods can be used, from grid sampling to Monte Carlo methods, to more sophisticated techniques like probabilistic numerics \citep{hennig2022}. For most such methods, we the integral is approximated using a discrete set of candidate successor states $S_{\text{succ}}(s,a)=\bigl\{s\sim q(\cdot|s,a) \bigr\}$ sampled from some proposal distribution $q$ as
\begin{equation}
\label{eq:reward-importance-sampling}
% \int_{s'\in\sts} p(s'|s,a) \max_{a'\in\as}Q_{\theta_Q}(\phi(s'),a')\approx 
\frac{1}{|\ssucc|}\sum_{s'\in \ssucc} \frac{p(s'|s,a)}{q(s'|s,a)} \max_{a'\in\as}Q_{\theta_Q}(\phi(s'),a').
\end{equation}

The variant of the approximation we choose depends of what information we have at our disposal:
\begin{itemize}
    \item If we have access to a probabilistic model $\hat p$ of the environment (which can either represent the true environment dynamics, if we know them, or our best inferred model of the dynamics including any epistemic uncertainty) that we can sample from, we can simply sample $\ssucc(s,a)=\{ s'\sim \hat p (\cdot|s,a) \}$ and drop the importance weight.
    \item If we can evaluate the density $\hat p$ we can directly use the importance sampling equation \ref{eq:reward-importance-sampling} with $q$ being a proposal distribution ideally close to $\hat p$.
    \item If all we have is a static set of trajectories $\D_+$ -- either just the expert ones $\D$, or also additional ones sampled from another, possibly random, policy -- we can crudely approximate the reward for a transition $s,a,s'\in\D_+$ using a singleton $\ssucc(s,a)=\{s'\}$. This is an approximation made by the baseline AVRIL algorithm, so to match, we use it for the experiments in Section~\ref{ssec:exp-classic-control}. In that case we require that $\D_{\text{eval}} \subseteq \D_+$, and for $s,a,s'\in\D_+$ we can define an empirical transition model $\hat p(s''|s,a)=\delta_{s'}(s'')$ to be used within the algorithm.
\end{itemize}

\begin{figure*}
    \centering
    \raisebox{0.025\height}{\includegraphics[width=0.30\textwidth]{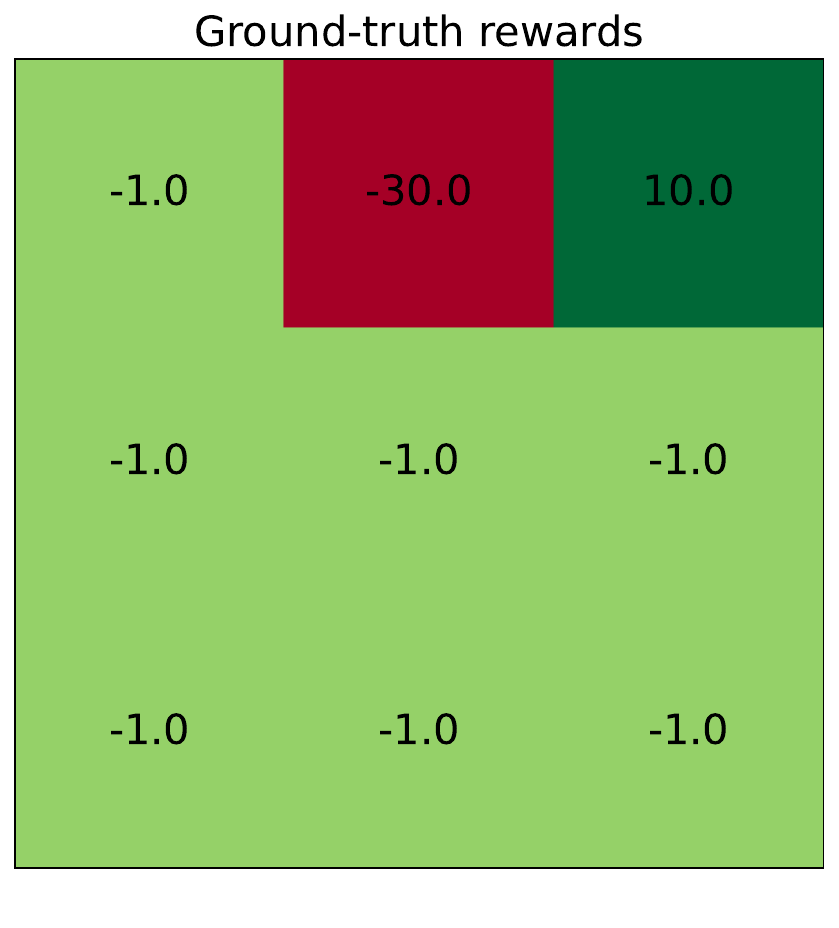}}\includegraphics[width=0.34\textwidth]{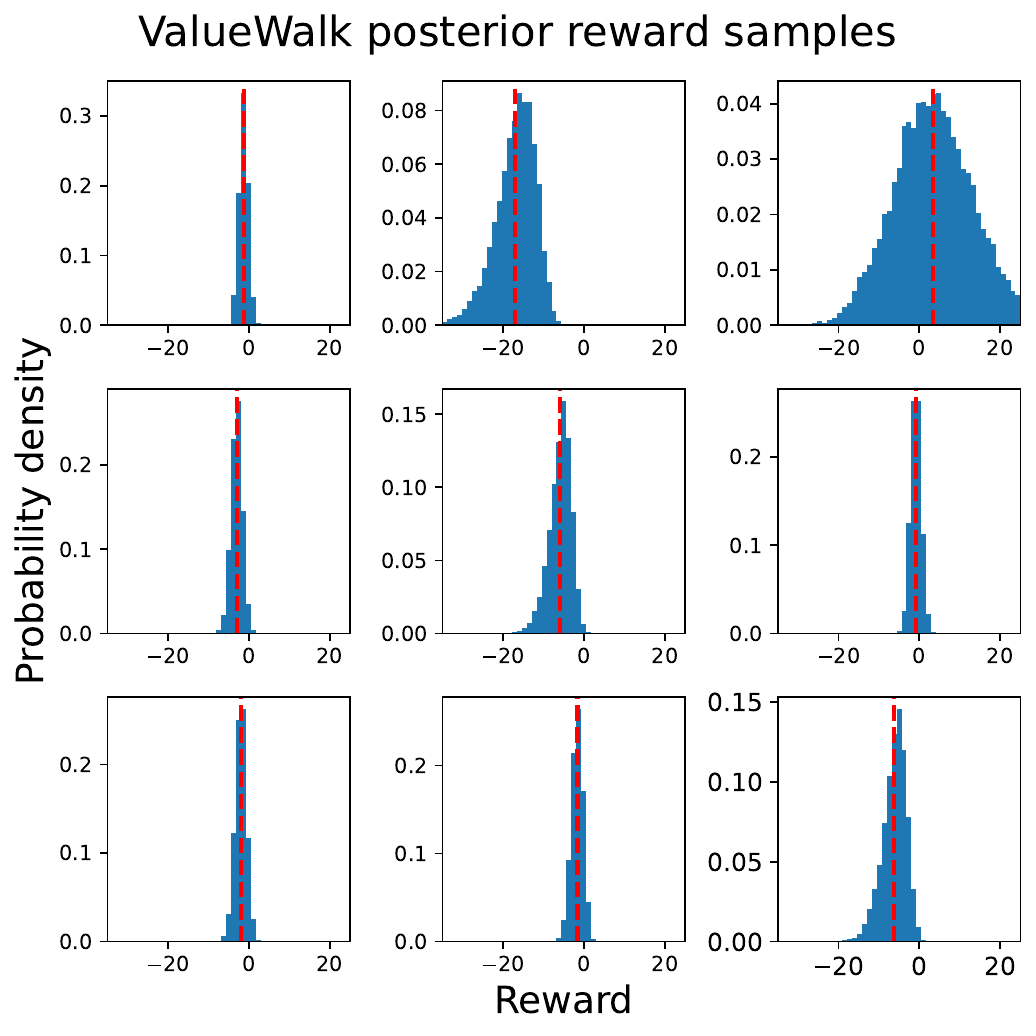}
    \includegraphics[width=0.34\textwidth]{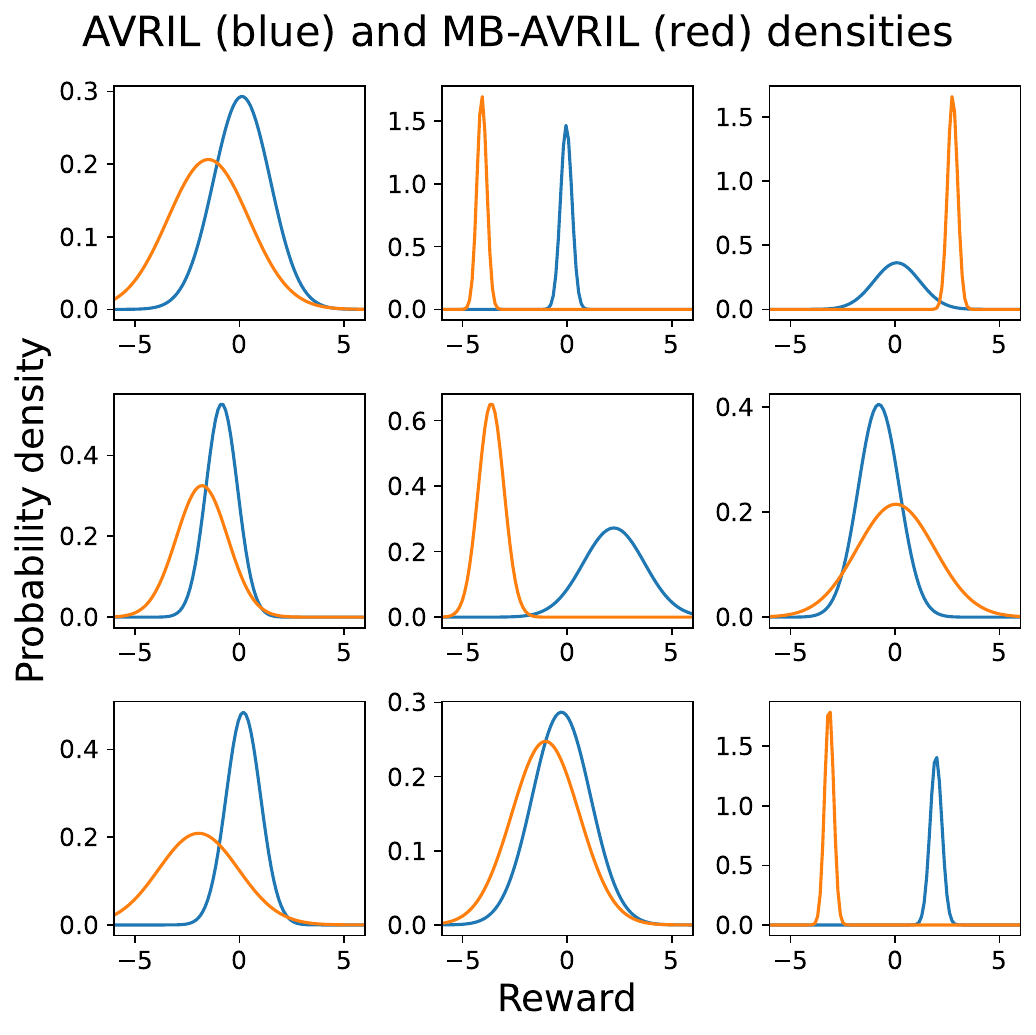}
    \caption{\textbf{Left}: Illustrative 3x3 gridworld. The agent always starts in the top left corner. The top right corner yields a reward of 10 and is terminal. The top centre tile represents an unsafe state that should be avoided and yields a reward of -30. \textbf{Centre}: Histograms of the samples from the posterior over rewards recovered by our ValueWalk algorithm corresponding to the 9 states of the gridworld. The red line indicates the mean. \textbf{Right}: Density functions of the posterior over rewards recovered by AVRIL and its model-based version, MB-AVRIL. Note the much narrower range of the reward axis relative to the histograms.}
    \label{fig:3x3_gw}
\end{figure*}

The corresponding continuous version of the algorithm is presented in Algorithm~\ref{alg:posterior-sim}. 

\begin{algorithm}[t]
    \KwData{candidate parameters of the Q-function $\theta_Q$, a set of expert demonstarations $\D$, a set of evaluation locations $D_{\text{eval}}$, prior over rewards $p_R$}
    Initialize empty sequence $\mathcal{R}_{\text{cand}}$ of candidate reward samples \;
    \For{$(s,a)\in\D_{\text{eval}}$}{
        % Subsample a set of counterfactual actions $ \as_{\text{cf}} \subset \as$\;
        % \eIf{a model $\hat p$ of the environment is available}{
            Sample a set of successor states $\ssucc=\{s''\sim\hat p(\cdot|s,a)\}$\;
            $ R(s,a) = Q_{\theta_Q}(\phi(s),a) - \gamma \frac{1}{|\ssucc|} \sum_{s'\in\ssucc}\max_{a'\in \as} Q_{\theta_Q}(s',a')$\;
        % }{
            % $ R(s,a) = Q_{\theta_Q}(\phi(s),a) - \gamma \max_{a'\in \as} Q_{\theta_Q}(s',a')$\;
        % }
        
        Append $R_t$ to $\mathcal{R}_{\text{cand}}$\;
    }
   Use samples to evaluate the prior $p_R(D_{\text{eval}}, \mathcal{R}_{\text{cand}})$ \;
   Use demonstrations to evaluate the likelihood $\mathcal{L}(\D|\theta_Q)$ per equation (\ref{eq:boltzmann-rat-cts}) \;
    \KwResult{unnormalized approximate posterior $p(\theta_Q|\D)\propto p_R(D_{\text{eval}}, \mathcal{R}_{\text{cand}}) p(\D|\theta_Q)$; candidate reward samples $\mathcal{R}_{\text{cand}}$.}
\caption{Calculation of the unnormalized posterior probability with continuous state representations for a single proposed parameter value $\theta_Q$ (performed in each step of MCMC). The returned candidate reward samples are accepted or rejected by the outer MCMC algorithm together with the candidate parameters $\theta_Q$.}
\label{alg:posterior-sim}
\end{algorithm}

We can store both the Q function parameters $\theta_Q$ and the corresponding reward samples depending on downstream needs. We can then fit a warped Gaussian process to the posterior reward samples to get a posterior reward distribution over the whole state space. This can then be used together with an algorithm for RL (or \emph{safe} RL in particular) to find an apprentice policy from the reward. Alternatively, as a shortcut, the posterior over Q-functions can be used to define an apprentice policy directly.

\subsection{Continuous actions}
The algorithm can be extended to continuous actions, replacing the sum in the Boltzmann likelihood (\ref{eq:boltzmann-rat-cts}) by an integral, and again, in turn, approximating it by a discrete set of samples from the action space. Simple discretizations (such as uniform sampling) can work well for low-dimensional action spaces (as we illustrate in our safe navigation experiment in the next section) but suffer from the curse of dimensionality, so a more sophisticated scheme would be needed for higher-dimensional action spaces. We leave that for future work.

\section{Experiments}
\label{sec:experiments}

We tested our method on gridworlds (for illustration and to compare the speed to PolicyWalk~\citep{ramachandran2007}, which our method builds upon but which is restricted to such small finite-space settings) and on 3 simulated control tasks with continuous states.

\subsection{Gridworld}
\label{ss:exp-gridworld}

For an illustration of the method with easily interpretable and visualizable features, we first test it on a simple gridworld environment shown in Figure~\ref{fig:3x3_gw}. We have generated a fixed set of 50 demonstration steps in the environment and used our method, ValueWalk (including the environment dynamics), the original PolicyWalk~\citep{ramachandran2007}, a sped-up version of PolicyWalk using HMC, which we denote by PolicyWalk-HMC (see Appendix~\ref{app:pw-hmc}), and AVRIL~\citep{chan2021} (which does not use environment dynamics, making the comparison unfair but illustrative of inherent limitations of such model-free methods) as well as a model-based version of AVRIL, which we mark as MB-AVRIL (see Appendix~\ref{app:mb-avril}), to recover a posterior over rewards from an independent normal prior with mean 0 and standard deviation of 10. 

With ValueWalk and PolicyWalk-HMC, we collected a total of 1,000 MCMC samples using HMC+NUTS with 100 warm-up steps, which lead to $\hat R \leq 1.01$ on each dimension (where $\hat R$ is the potential scale reduction factor \citep{gelman1992}, a commonly used indicator that the chains have mixed well). For vanilla PolicyWalk, we collected 1M samples (since those are much more correlated). We then also ran PolicyWalk, PolicyWalk-HMC and ValueWalk on a 6x6 and 12x12 version of the gridworld to examine how the compute times of these MCMC-based methods scale.

\begin{table}[h]
    \caption{\textbf{Speed comparison.} Time per effective sample (in seconds) produced by PolicyWalk, PolicyWalk-HMC, and ValuWalk on a 3x3, 6x6, and 12x12 gridworld respectively.}
    \centering
    \begin{tabular}{r S[table-format=3.2] S[table-format=2.2] S[table-format=1.2]}
        \toprule
        {States} & {PolicyWalk} & {PolicyWalk-HMC} & {ValueWalk} \\
        \midrule
        9 & 0.86 & 0.80 & \textbf{0.20} \\
        36 & 9.00 & 4.18 & \textbf{0.71} \\
        144 & 246.43 & 18.44 & \textbf{0.77} \\
        \bottomrule
    \end{tabular}
    \label{tab:time-on-gridworlds}
\end{table}

\subsubsection{Results}
Both PolicyWalk and ValueWalk (our algorithm) resulted in matching posterior reward samples as expected (Kolmogorov-Smirnov did not reveal any significant differences with all p-values > 0.2 on each of the 9 dimensions of the reward). The histograms of the samples are shown in the middle plot in Figure~\ref{fig:3x3_gw}. The speed comparison of the two methods can be found in Table~\ref{tab:time-on-gridworlds}, showing ValueWalk indeed runs faster than the baseline PolicyWalk algorithm in both variants, with the advantage growing with an increasing size of the environment and a correspondingly growing number of reward parameters.

The posterior tends to concentrate around the ground truth value, except in the terminal top-right state, which shows that the data are consistent with both positive values and mildly negative ones (since it is a terminal state, the fact that the expert heads there can equally well be explained by escaping from negative-reward states as by trying to incur a positive reward. This is confirmed if we look at the correlation between various rewards shown in Figure~\ref{fig:3x3_2d_hists} in Appendix~\ref{app:gw-details} which shows that the reward in the terminal state can be negative only if other states' rewards are also negative).

We also ran AVRIL on this simple gridworld (which took 43s to converege). In terms of the resulting posterior, there are 3 things to note (see Figure~\ref{fig:3x3_gw} centre and right). Firstly, the posteriors of some states are much tighter -- the x-axis is zoomed in about 4x relative to the ValueWalk histograms. This is due to the fact that AVRIL does not model the uncertainty in the Q-function, instead learning only a point estimate. The reward posterior is then pegged to this Q-function point estimate thus significantly reducing its variance. As a result, both the reward of the obstacle and of the goal are extremely unlikely under the posterior.

Secondly, we can observe that the posterior reward for the obstacle is not any lower than that for most other states. This is because this state is never visited in demonstrations, and AVRIL -- not taking the environment dynamics into account -- consequently does not update this value. This illustrates an important downsides faced by methods without an environment model. (Note that the model-free version of ValueWalk would face the same issue.) This defect is fixed in the model-based version of AVRIL, AVRIL-MB.

Finally, we can see that while the true posterior differs from normal (see especially the strong skew of the negative-reward top middle cell), AVRIL is limited by its normal variational distribution. While in theory, AVRIL could be used with any variational family, we first need to determine which family may be suitable, for which an MCMC-based method such as ours is a useful instrument.

\begin{figure*}
    \centering
    \includegraphics[width=0.32\textwidth]{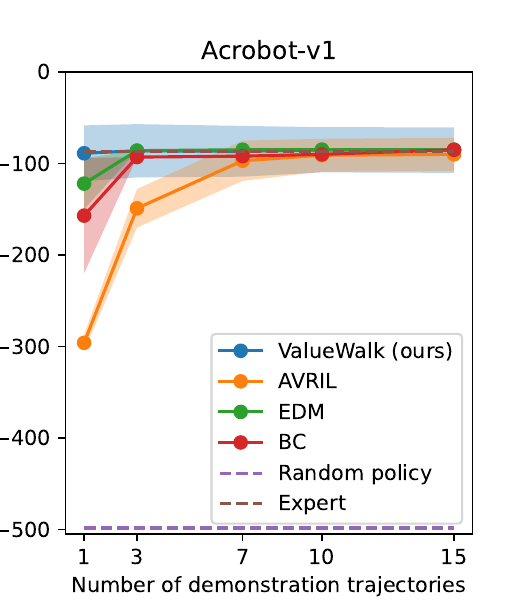}
    \includegraphics[width=0.32\textwidth]{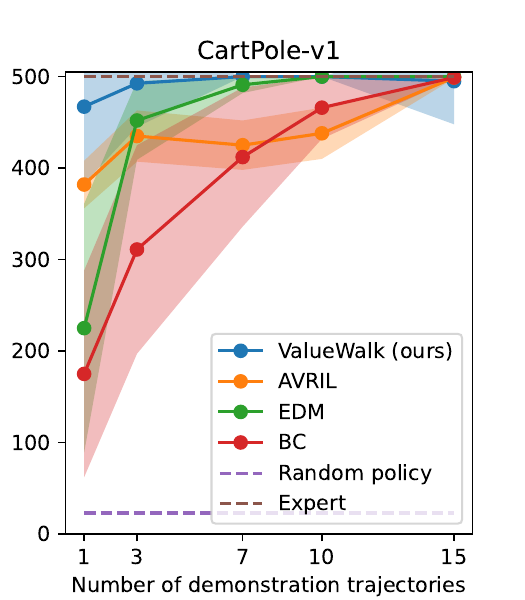}
    \includegraphics[width=0.32\textwidth]{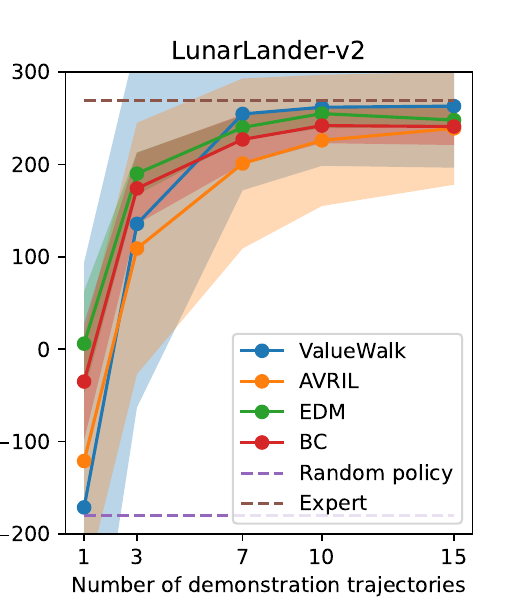}

    \caption{The test performance of an apprentice agent for ValueWalk and 3 baseline methods for different numbers of demonstration trajectories. The ValueWalk apprentice agent takes the action that maximizes the median of the posterior Q-value samples. The line shows mean performance across 5 runs with different sets of expert demonstrations; the shaded region shows mean$\pm$std.}
    % \textcolor{red}{[can you explain acronyms in labels, here in caption or in main text?]}% The background colours and associated numbers indicate rewards received for ending a step in each state.}
    \label{fig:classic_env_results}
\end{figure*}

% Figure~\ref{fig:2d_cts} illustrates the demonstrations used and the learnt apprentice policies. Firstly, note that while the demonstrations are highly stochastic, the methods learn estimates of the \textit{optimal} Q-function, thus possibly allowing them to produce apprentice policies with performance superior to that of the expert. Secondly, even to a human eye, the demonstrations leave it ambiguous whether there may be an unsafe region that the expert is avoiding, or whether the said area was missed by chance. While both the AVRIL apprentice policy, and the ValueWalk policies maximizing the mean and median of the Q-value distribution tend to go straight to the goal region (hitting the hazardous obstacle between 68 and 81\% of cases), the 0.1-quantile-maximizing policy tends to avoid the region (hitting it in only 13\% of cases across 100 sampled trajectories). This illustrates an important benefit of recovering a full posterior -- it allows producing similar conservative policies based on statistics of the posterior distribution other than the usual mean.

\subsection{Classic Control Environments}
\label{ssec:exp-classic-control}

To allow for direct comparison, we also evaluated ValueWalk on three classic control environments that were used to evaluate AVRIL by its authors: CartPole, where the goal is to balance an inverted pendulum by controling a cart underneath it, Acrobot, where the goal is to swing up a double pendulum using an actuated joint, and LunarLander, where the goal is to safely land a simulated lander on the surface of the moon. We used the same setup as was used for AVRIL to study the performance of an apprentice agent as a function of the number of demonstration trajectories for 1, 3, 7, 10, and 15 trajectories. The apprentice agent was evaluated on 300 test episodes and the mean reward is reported. We also compare against energy-based distribution matching (EDM; \cite{jarrett2020}) -- a successful method for strictly batch imitation learning -- and plain behavioural cloning (BC) as a simple baseline. Baseline results were taken from \cite{chan2021}.

\subsubsection{Results}

\begin{figure*}
    \centering
    \includegraphics[width=0.32\textwidth]{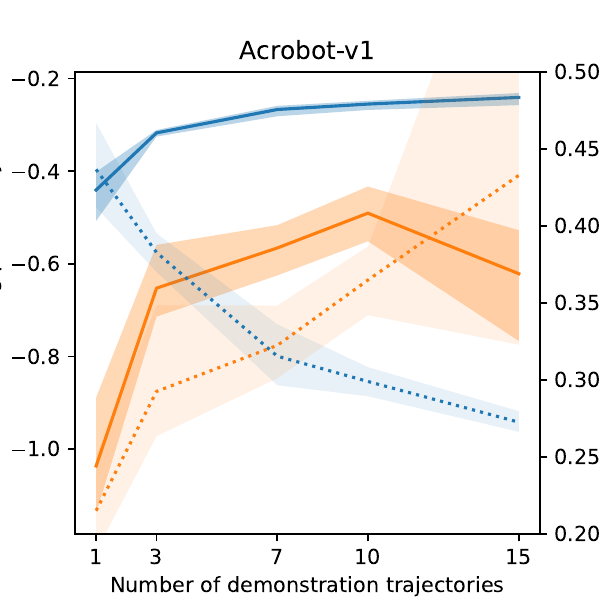}
    \includegraphics[width=0.32\textwidth]{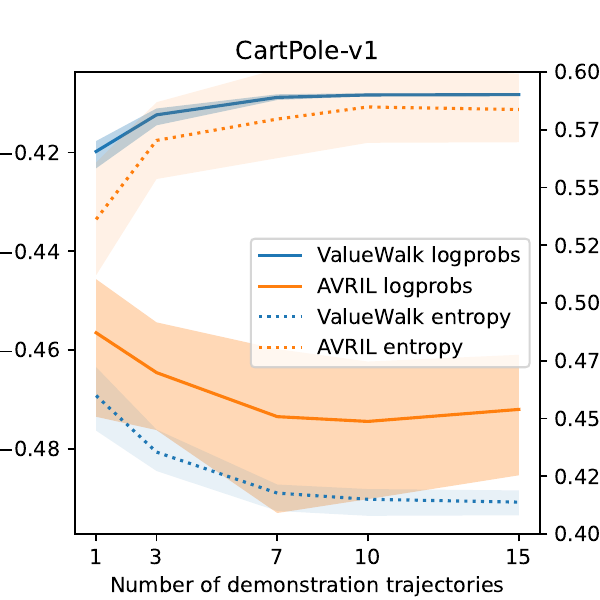}
    \includegraphics[width=0.32\textwidth]{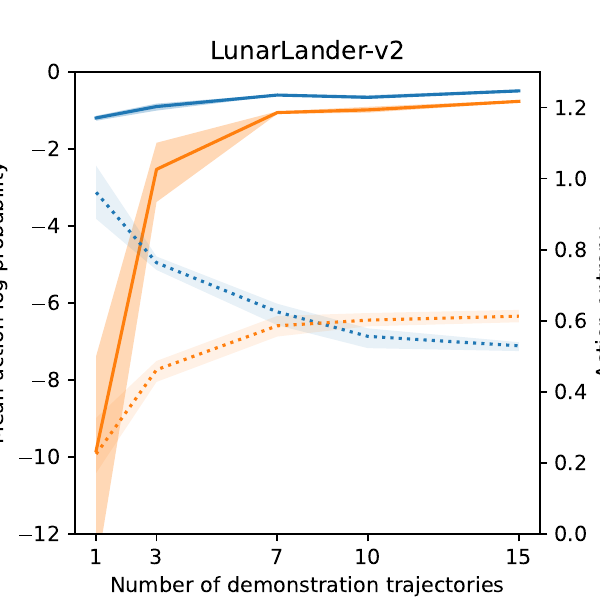}

    \caption{The log likelihood on a hold-out set of 100 test demonstrations and the entropy of the action predictions produced by ValueWalk and AVRIL. The plot shows the mean and the 90\% confidence interval on the value of the mean calculated using the bootstrap.}
    % \textcolor{red}{[can you explain acronyms in labels, here in caption or in main text?]}% The background colours and associated numbers indicate rewards received for ending a step in each state.}
    \label{fig:classic_env_logprobs}
\end{figure*}

The results are plotted in Figure~\ref{fig:classic_env_results}. While both agents do close to expert-level when provided with 15 expert trajectories, our algorithm reaches this level with fewer expert demonstrations. We hypothesize that this is due to treating the Q-function in a Bayesian way, as opposed to a point estimate in AVRIL, leveraging the advantages of a fully Bayesian treatment in the low data regime. 

To support this, we can look at the log likelihoods of the action predictions on a hold-out set of 100 test trajectories and the entropies of the predictive posterior shown in Figure~\ref{fig:classic_env_logprobs}. For ValueWalk, the log likelihood increases as the method is given more trajectories, while the prediction entropy decreases as we would expect from a Bayesian method given increasing amounts of information. On the other hand, we do not consistently see similar behaviour in AVRIL. The test log likelihood consistently increases only in the case of the LunarLander environment, where it, however, starts from extremely low levels (the initial \emph{mean} log probability of -10 would correspond to a probability of $5*10^{-5}$, suggesting the method has been putting practically 0 probability on actions taken by the expert among only 4 possible actions). Also, the prediction entropy of AVRIL tends to increase with seeing more trajectories. That suggests that AVRIL may be exhibiting overfitting behaviour in the low data regimes, which Bayesian methods should generally avoid.

The ValueWalk experiments on the control environments were run for 10,000 sample steps with 4,000 warm-up steps on Lunar Lander and 5,000 sample steps with 2,000 warm-up steps on Cartpole and Acrobot. The training takes between 2 and 19 hours of wall time on a single Nvidia RTX 3090 GPU\footnote{Experiments with fewer trajectories were run on a CPU.} where AVRIL takes 1-5 minutes to converge.

\section{Discussion}
We presented a method that allows us to apply MCMC-based Bayesian inverse reinforcement learning to continuous environments. The method maintains the attractive properties of MCMC methods: it is agnostic to the shape of the posterior (where variational methods assume a particular parameterized distribution family) and given enough compute, produces samples from the true posterior. This comes at a large computational cost relative to cheaper methods, such as variational inference. However, we still think MCMC-methods do have a role to play in the Bayesian IRL ecosystem. 

Firstly, we have shown that staying true to the Bayesian posterior does bring benefits in terms of superior performance on imitation learning tasks. Furthermore, the computational cost is paid in the learning phase, with inference at deployment being fast (sub millisecond per step in all cases, which would be sufficient for real-time control in most possible use cases and could be further optimized).

Secondly, we think that having a method that can draw samples from the true posterior can be extremely important in the process of developing other, faster or easier to scale methods, since it allows us to assess how their approximation deviates from the true posterior and how it impacts their performance. Also, variational methods in particular require a pre-specified family of distributions over which the optimization is subsequently run. ValueWalk can be used in an exploratory phase to determine what family of distributions may be appropriate for the problem at hand, before possibly using the advantages of variational methods to scale up. 

Thus, despite their steep computational cost, we think MCMC methods have their place in Bayesian inverse reinforcement learning, and our method is a sizable step in extending them up to a wider range of settings.

\bibliography{2305_zotero}

\newpage

\onecolumn

\title{Walking the Values in Bayesian Inverse Reinforcement Learning\\(Supplementary Material)}
\maketitle

\appendix

% \bibliographystyle{iclr2024_conference}

% \section*{Appendix}

% \section{Finite Space ValueWalk with Unknown Dynamics}
% [TODO]

\section{Unknown transition probabilities}
\label{app:unknown-transitions}
Section~\ref{ssec:method-finite} presents a version of the ValueWalk algorithm for finite state and action spaces that assumes known transition probabilities. However, the key trick used in ValueWalk extends to unknown transition probabilities as well. 

One simplified option to handle unknown transitions, also employed in the continuous-state case in Section~\ref{ssec:method-cts} matching the setting used by AVRIL, is replacing the transition probabilities with their empirical estimate $\hat p(s'|s,a) = \xi(s,a,s')/\xi(s,a)$ where $\xi(s,a,s'),\xi(s,a)$ are the numbers of occurrences in the set of demonstration set of the transition $(s,a,s')$ and state-action pair $(s,a)$. In the finite-state, this would mean limiting the evaluation of the prior in Algorithm~\ref{alg:posterior-unknown-transitions} to only those state-action pairs that do occur in the data (i.e. replacing vectors and matrices on lines 3-6 by the appropriate sub-vectors and sub-matrices).

A more principled Bayesian alternative is of course using full Bayesian inference also over transitions -- in that case, we can perform the MCMC sampling jointly over both the transitions and the Q function parameters, recovering samples from the full joint posterior. The changes needed are (1) treating parameters of the transition model as inputs in the algorithm, (2) adding a prior over those parameters (so the joint prior will be a product of the Q-parameter prior and the transition-parameter prior), and (3) including transition probabilities in the likelihood. Here is the adaptation of the finite-space algorithm to this case of unknown probabilities:
\begin{algorithm}
% \RestyleAlgo{ruled}
\KwData{a candidate matrix of Q values, a candidate transition matrix $P$, set of expert demonstrations $\D$, prior over rewards $p_R$, prior over transitions $p_P$}%, expert rationality coefficient $\alpha$, optimality approximation coefficient $\bar \alpha$}
\For{$s,s'\in \sts, a,a'\in \as$}{
    $\pi(a|s) = \exp(\bar\alpha Q(s,a))/\sum_{a'\in \as}\exp(\bar\alpha Q(s,a'))$\;
    $\bar P(s,a;s',a') = P(s'|s,a)\pi(a'|s')$ \; 
    }
$\bar R = (I - \gamma \bar P) \bar Q$ where $\bar R,\bar Q$ are flattened vector versions of the reward and Q-value matrices \;
$p_Q(Q) = p_R(\bar R) \det(I-\gamma \bar P)$ \;
$p(\D|Q) = \prod_{(s,a,s')\in\D} P(s'|s,a)\exp(\alpha Q(s,a))/\sum_{a'\in \as}\exp(\alpha Q(s,a'))$ \;
\KwResult{$p(Q,P|\D) \propto p_P(P) p_Q(Q) p(\D|Q,P)$; candidate sample $\bar R$}
\caption{Calculation  of the unnormalized posterior for finite $\sts$ and $\as$ with unknown transition probabilities (performed in each step of HMC). The resulting candidate reward sample $\bar R$ is then accepted/rejected together with the corresponding Q and P.}
\label{alg:posterior-unknown-transitions}
\end{algorithm}

\section{Proof of soundness of the algorithm}
\label{app:proofs}

\begin{theorem}
\label{thm:detailed-balance}
Assume that the transition kernel $q_Q$ satisfies the detailed balance condition
$$\frac{q_Q(Q'|Q)}{q_Q(Q|Q')} = \frac{p_Q(Q'|D)}{p_Q(Q|D)}$$
with respect to the posterior over Q values defined in Algorithm 1. Then the associated implicit Markov chain over rewards also satisfies the detailed balance condition with respect to the posterior $p_R(R|D)$.
\end{theorem}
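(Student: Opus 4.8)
The plan is to exploit the fact that the reward chain is the deterministic image of the $Q$ chain under the map $T:Q\mapsto R=(I-\gamma\bar P)Q$ appearing in Algorithm~\ref{alg:posterior-finite}, and to show that detailed balance is preserved under such a change of variables because the Jacobian factors enter the balance equation symmetrically and cancel. First I would record the structural facts needed: $T$ is a bijection with Jacobian determinant $\det(I-\gamma\bar P)>0$ (treating the greedy policy $\pi^Q$, and hence $\bar P$, as locally constant, exactly as the cached-determinant remark in Section~\ref{ssec:method-finite} assumes), and the likelihood $\mathcal{L}(\D|Q)$ depends on $Q$ only, so it can be regarded equivalently as a function of $R=T(Q)$ via the inverse map.

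Next I would pin down the two target densities. The $Q$-posterior is $p_Q(Q|D)\propto p_Q(Q)\mathcal{L}(\D|Q)$, and using the prior change-of-variables identity $p_Q(Q)=p_R(TQ)\det(I-\gamma\bar P)$ already derived in the text, the pushforward of $p_Q(\cdot|D)$ under $T$ is exactly $p_R(R|D)\propto p_R(R)\mathcal{L}(\D|T^{-1}R)$, i.e. the reward posterior. This confirms that the stationary distribution of the induced reward chain is the intended target, so it remains only to verify detailed balance for the induced kernel.

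Then I would write the induced reward kernel explicitly. If $Q'\sim q_Q(\cdot|Q)$ and $R'=TQ'$, the standard density-transformation rule gives $q_R(R'|R)=q_Q(T^{-1}R'|T^{-1}R)\,\lvert\det J_{T^{-1}}(R')\rvert$. Substituting this together with the change-of-variables expression $p_R(R|D)=p_Q(T^{-1}R|D)\,\lvert\det J_{T^{-1}}(R)\rvert$ into the reward detailed balance equation $p_R(R|D)\,q_R(R'|R)=p_R(R'|D)\,q_R(R|R')$ and writing $Q=T^{-1}R$, $Q'=T^{-1}R'$, each Jacobian factor $\lvert\det J_{T^{-1}}(R)\rvert$ and $\lvert\det J_{T^{-1}}(R')\rvert$ appears once on each side and cancels, so the identity collapses to $p_Q(Q|D)\,q_Q(Q'|Q)=p_Q(Q'|D)\,q_Q(Q|Q')$, which is precisely the assumed detailed balance of $q_Q$.

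The main obstacle is justifying that $T$ is a genuine bijection (or at least a piecewise diffeomorphism) so that the change-of-variables manipulations are legitimate: with the hard $\argmax$ policy the map is only piecewise linear, the non-differentiable boundaries forming the measure-zero set where $\pi^Q$ switches, and one must argue these contribute nothing to the continuous detailed balance relation; with the softmax policy the global invertibility and the exact Jacobian require a little more care than the cached $\det(I-\gamma\bar P)$ suggests. A secondary technical point is the rejection atom of the HMC/Metropolis kernel, but since a rejection maps $Q\mapsto Q$ and hence $R\mapsto R$, it contributes identically to both sides of the balance equation and can be handled separately from the off-diagonal part analysed above.
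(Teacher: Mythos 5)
Your proposal is correct and follows essentially the same route as the paper's own proof: both express the induced reward kernel and the reward posterior via the change of variables $R=(I-\gamma\bar P)Q$ and observe that the Jacobian determinants cancel in the detailed balance relation, reducing it to the assumed balance of $q_Q$. The only difference is that you additionally flag the piecewise-linearity of the map under the hard $\argmax$ policy and the rejection atom of the Metropolis kernel, technical points the paper's proof passes over silently.
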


\begin{proof}
Let $q_Q$ be the transition kernel over Q-values that satisfies the detailed balance condition with respect to the posterior $p_Q(Q|D)$ as assumed in the theorem statement.

The implicit transition kernel $q_R$ over rewards induced by $q_Q$ can be expressed as
\begin{equation}
q_R(R'|R) = q_Q(Q(R')|Q(R)) \left|\det\left(\frac{\partial Q(R')}{\partial R'}\right)\right|
\end{equation}
where $Q(R)=(I-\gamma \bar{P})^{-1}R$ is the Q-value corresponding to reward $R$ as used in Algorithm 1. The determinant term accounts for the change of variables from $Q$ to $R$.

The posterior over rewards can be expressed in terms of the posterior over Q-values as
\begin{equation}
p_R(R|\D) = p_Q(Q(R)|\D) \left|\det\left(\frac{\partial Q(R)}{\partial R}\right)\right| \
= p_Q(Q(R)|\D) \left|\det(I-\gamma \bar{P})^{-1}\right|.
\end{equation}

Now consider the ratio of the implicit transition kernel:
\begin{multline}
\frac{q_R(R'|R)}{q_R(R|R')} = \frac{q_Q(Q(R')|Q(R))}{q_Q(Q(R)|Q(R'))}  \frac{\left|\det\left(\frac{\partial Q(R')}{\partial R'}\right)\right|}{\left|\det\left(\frac{\partial Q(R)}{\partial R}\right)\right|} \
= \frac{p_Q(Q(R')|D)}{p_Q(Q(R)|\D)}  \frac{\left|\det\left(\frac{\partial Q(R')}{\partial R'}\right)\right|}{\left|\det\left(\frac{\partial Q(R)}{\partial R}\right)\right|} \
= \\
\frac{p_R(R'|\D) \det((I-\gamma\bar P')^{-1})}{p_R(R|\D) \det((I-\gamma\bar P)^{-1})}\frac{\det(I-\gamma\bar P')}{ \det(I-\gamma\bar P)}
= \frac{p_R(R'|D)}{p_R(R|D)}
\end{multline}
where the second equality follows from the assumed detailed balance condition on $q_Q$, the last equality follows from the expression for $p_R(R|D)$ derived above, and $\bar P'$ are the joint state-action transitions corresponding to $Q'$.
Thus, the implicit Markov chain over rewards induced by the transition kernel $q_Q$ satisfies detailed balance with respect to the posterior $p_R(R|D)$, as claimed.
\end{proof}

The theorem establishes an important property of the ValueWalk method, namely that the implicit Markov chain over rewards induced by the HMC-based sampling of Q-values satisfies detailed balance with respect to the true posterior over rewards given the demonstrations, $p_R(R|D)$. This property is crucial for the soundness of the method.

Detailed balance is a sufficient condition for the Markov chain to have a stationary distribution equal to the target distribution, in this case $p_R(R|D)$. This means that, assuming the chain is ergodic, the samples of rewards obtained from the ValueWalk method will asymptotically follow the true posterior distribution, regardless of the initial distribution. In other words, the theorem guarantees that, given enough samples, ValueWalk will correctly characterize the posterior uncertainty over rewards, which is a key goal of Bayesian inverse reinforcement learning.

\section{Variations of the baseline methods}

\subsection{PolicyWalk-HMC}
\label{app:pw-hmc}
We proposed that ValueWalk be used with Hamiltonian Monte Carlo (HMC) treating the underlying parameters as fully continuous. By contrast, PolicyWalk, as originally proposed, samples the next proposed value of the reward parameters from neighbours of the current point on a discretized grid. To isolate the speed-up effect of our Q-space trick from the speed-up due to HMC, we also implemented a version of PolicyWalk with HMC (denoted by PolicyWalk-HMC in the paper. This involves calculating the gradient of the posterior with respect to the reward parameters. To do that, we use the matrix-multiplication computed Q-values. We omit the dependence of the combined transition-policy matrix to the gradient, since the derivative of the optimal policy with respect to the reward is zero almost everywhere.

\subsection{Model-based AVRIL}
\label{app:mb-avril}
In the gridworld experiments, both PolicyWalk and ValueWalk are leveraging the environment dynamics, which AVRIL does not use. For fairer comparison, we are thus including also a model-based version of AVRIL, which differs from the original (model-free) AVRIL in that it evaluates the KL divergence from the prior across all states (the gridworld is using state-only rewards), and the TD term is calculated (1) over state action pairs and (2) the next-state value can be estimated using the actual expectation, instead of just using the Q-value of the next empirical state-action pair from the demonstrations. The remainder of the algorithm remains the same.

% Model
% We are comparing ValueWalk with AVRIL~\citep{chan2021}, which was originally designed to work with discrete actions. When we are comparing our method to AVRIL on continuous-action environments, we use the following continuous extension of AVRIL:
% \begin{enumerate}
%     \item The original Boltzman likelihood \ref{eq:boltzmann-rat} is replaced by its continuous version \ref{eq:boltzmann-rat-cts}, which, in practice, gets calculated using the same approximation as our method.
%     \item Instead of taking the state as input and producing an output for each of the discrete actions, the Q function and the variational distribution for the reward now takes in a state-action pair (or a batch of those) and produces a single Q-value for those or a single set of variational distribution parameters. 
% \end{enumerate}

\section{Experiment details}
For the gridworld experiments, we used a version of AVRIL learning a Q-value for each state-action pair and a mean and variance value for the reward in each state. For PolicyWalk, we ran inference over a reward vector containing a reward value per each state. For ValueWalk, we ran inference over the state-value vector.

In the continuous state space environments, for the 3 continuous baseline methods, we match the setup from \cite{chan2021} and use neural network models with 2 hidden layers of 64 units and an ELU activation function. For our experiments, we scale up the network size with the complexity of the problem: we use one hidden layer with 8 units for 
%the 2D safe navigation task and 
Cartpole, 1 layer of 16 units for Acrobot, and 2 layers of 24 units for LunarLander. In each case, we also tried running AVRIL with a matching network size but in each case it performed similarly or usually worse than the default 2x64 setup for which results are reported.

For PolicyWalk and ValueWalk, we use the Pyro \citep{bingham2018} implementation of HMC+NUTS. For the control environment experiments, we ran with 2,000 warm-up steps and 10,000 inference steps for Lunar Lander and 2,000 warm-up steps with 5,000 inference steps for Carpole and Acrobot (since we are inferring fewer network parameters there). We automatically tune the step size during warm-up but do not tune the mass matrix. 

In the continuous environments, we use a Gaussian process prior with an RBF kernel with fixed scale of 1 and fixed lengthscale of 0.2 for Cartpole and Acrobot and 0.03 for Lunar Lander (chosen manually based on the distribution of features in the demonstrations for each environment, where the lengthscale roughly corresponds to the std of one-step change in each feature). 

In Cartpole, Acrobot, and Lunar Lander, we reuse the demonstration sets provided by the authors of AVRIL. Each contains 1000 demonstration trajectories, from which we randomly chose a set of 100 test trajectories and then split the remaining examples into 5 training splits. We then re-ran each experiment for each number $n_{\text{traj}}=1,3,7,10,15$ of trajectories on the first $n_{\text{traj}}$ trajectories of each of the 5 splits and evaluated the resulting apprentice agent on 300 episodes of the environment. We report the mean and std across the splits and evaluations.

Unless otherwise stated, we use a Boltzmann rationality coefficient of 3.

\section{Additional details of results}

\subsection{Gridworld experiments}
\label{app:gw-details}
% Figure~\ref{fig:3x3_cdfs} shows the empirical cumulative distribution functions of the 10,000 posterior reward samples collected by PolicyWalk and ValueWalk and confirms both methods track the same posterior.
% % \begin{figure}
% %     \centering
% %     \includegraphics[width=0.99\textwidth]{figs/3x3_cdfs.pdf}
% %     \caption{Cumulative distribution functions of the posterior distributions over rewards recovered by PolicyWalk and ValueWalk in the 3x3 gridworld, illustrating that the two methods recover the same posterior.}
% %     \label{fig:3x3_cdfs}
% % \end{figure}

Figure~\ref{fig:3x3_2d_hists} shows 2-D histograms of pairwise joint posteriors over rewards of the 9 states of the gridworld. Two aspects of the expert's behaviour are captured by this plot and may not be obvious from the simple histograms in Figure~\ref{fig:3x3_gw}. Firstly, the agent heading to the terminal top right corner can be explained either by the reward there being positive, or by the reward in other states being negative, and thus the agent using the terminal state as a way to escape incurring further negative rewards. Secondly, note that practically all of the probability mass is placed on the reward of the obstacle tile being lower than that of the two tiles below, thus explaining the expert avoiding the obstacle tile.

The plot also clearly shows that the posterior is non-Gaussian (note especially the sharp edge expressing high confidence that the ratio of the two values does not cross a certain threshold) and thus could not be captured by the Gaussian-assuming variational prior. Also, the rewards of different states are, sometimes very tightly, correlated, so modelling them as independent would again be inappropriate.

\begin{figure}
    \centering
    \includegraphics[width=0.99\textwidth]{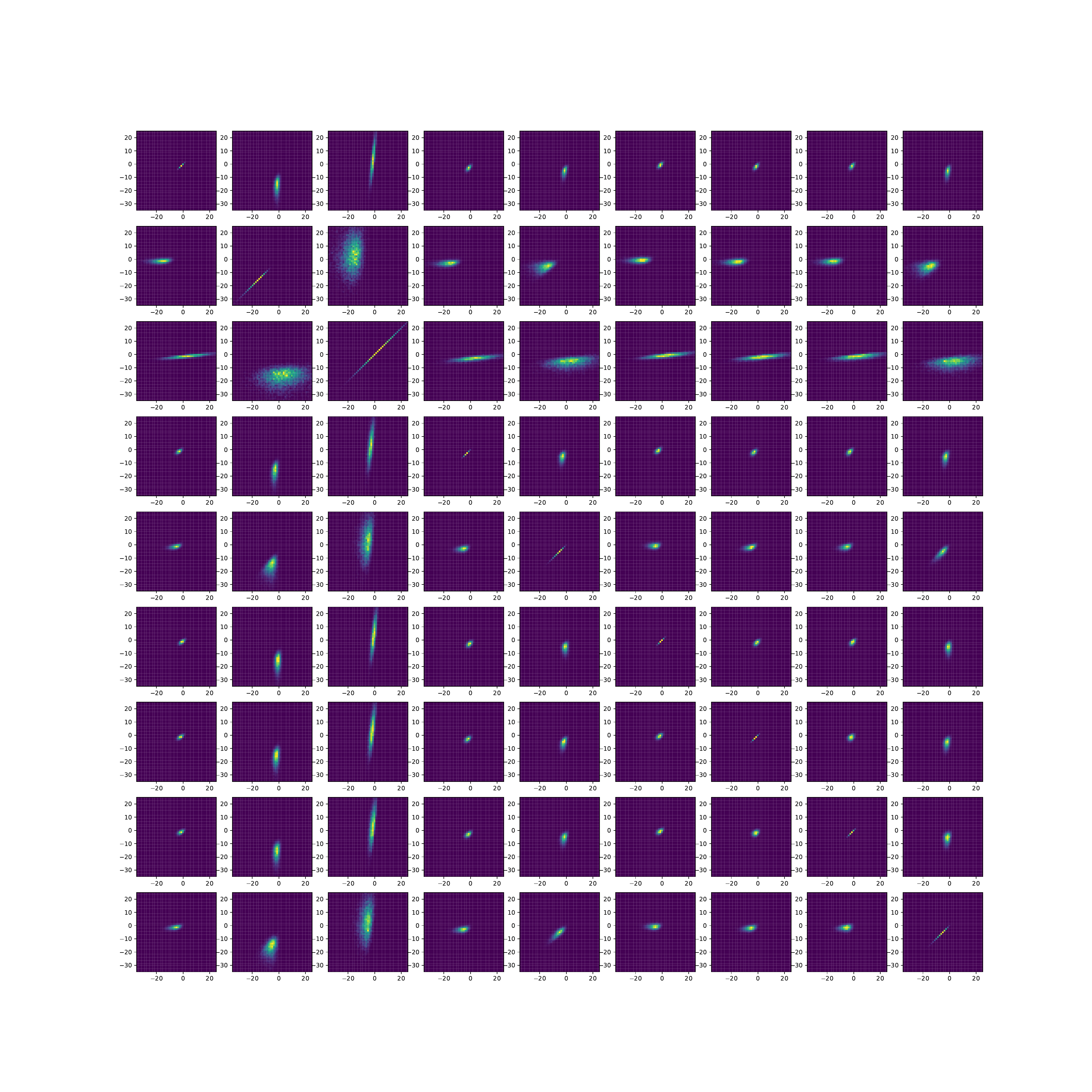}
    \caption{2-D histograms representing the joint posteriors of the rewards associated with the 9 states of the gridworld (enumerated left-to-right, top-to-bottom, so state 3 is the goal state in the top right corner.}
    \label{fig:3x3_2d_hists}
\end{figure}

% \subsection{Variance of posterior probability estimate}
% The numerical estimate of the posterior probability as outlined in Algorithm~\ref{alg:posterior-sim} can produce significant variance as displayed in Figure~\ref{fig:prior_variance}. The estimates vary over several orders of magnitude, which can be crippling if used to calculate the acceptance probability in the MCMC algorithm.

% \begin{figure}
%     \centering
%     \includegraphics[width=0.99\textwidth]{}
%     \caption{Histograms of the naive estimates of the prior probability for a simple 1D track environment. }
%     \label{fig:prior_variance}
% \end{figure}

\end{document}